\newcommand{\fedavg} {\textsf{Fed\-Avg}}
\newcommand{\fedsgd} {\textsf{Fed\-SGD}}
\newcommand{\fedft} {\textsf{Fed\-FT}}
\newcommand{\fedavgldp} {\textsf{Fed\-Avg\-lDP}}
\newcommand{\vaers} {\textsf{VAERS}}
\newcommand*{\defeq}{\stackrel{\mathsmaller{\mathsf{def}}}{=}}
\newcommand{\ind} {\emph{Ind}}
\newcommand{\fl} {\emph{FL}}
\newcommand{\ftfl} {\emph{FT-FL}}
\newcommand{\dpfl} {\emph{DP-FL}}
\newcommand{\ftdpfl} {\emph{FT-DP-FL}}
\begin{document}

\title{Private Cross-Silo Federated Learning for Extracting Vaccine Adverse Event Mentions}


\author{Pallika Kanani \ \ \ \ \ \ \ \ \  Virendra J. Marathe\ \  \ \ \ \ \ \ \ Daniel Peterson}
\affiliation{Oracle Labs}
\email{{pallika.kanani,virendra.marathe,daniel.peterson}@oracle.com}
\author{Rave Harpaz  \ \ \ \ \ \ \ \ \ Steve Bright}
\affiliation{Oracle}
\email{{rave.harpaz,steve.bright}@oracle.com}

\pagestyle{plain}




\begin{abstract}

  Federated Learning (FL) is quickly becoming a goto distributed
  training paradigm for users to jointly train a global model without physically 
  sharing their data.  Users can indirectly contribute to, and
  directly benefit from a much larger aggregate data corpus used to
  train the global model.  However, literature on successful
  application of FL in real-world problem settings is somewhat
  sparse.  In this paper, we describe our experience applying a FL
  based solution to the Named Entity Recognition (NER) task for an
  adverse event detection application in the context of mass scale
  vaccination programs.  We present a comprehensive empirical analysis
  of various dimensions of benefits gained with FL based training.
  Furthermore, we investigate effects of tighter \emph{Differential
    Privacy (DP)} constraints in highly sensitive settings where
  federation users must enforce \emph{Local DP} to ensure strict
  privacy guarantees.  We show that local DP can severely cripple the
  global model's prediction accuracy, thus disincentivizing users from
  participating in the federation.  In response, we demonstrate how
  recent innovation on \emph{personalization} methods can help
  significantly recover the lost accuracy.  We focus our analysis on
  the Federated Fine-Tuning algorithm, \fedft, and prove that it is
  \emph{not} PAC Identifiable, thus making it even more attractive for
  FL-based training.

\end{abstract}

\maketitle

\section{Introduction}

Federated Learning (FL) is a distributed ML paradigm that enables
multiple users to jointly train a shared model without sharing their
data with any other users~\cite{bonawitz19,konecny15}, offering
advantages in both scale and privacy.  In FL, multiple users wish to
perform essentially the same task using ML, with a model architecture
that is agreed upon in advance.  Each user wants the best possible
model for their individual use, but often has a limited budget for
labeling their own data. Pooling the data of multiple users could
improve model accuracy, because accuracy generally increases with
increased training data. However user data cannot be shipped to a
common model training facility due to bandwidth limitations or data
privacy concerns.  As a result, users locally train the shared
(global) model on their local data, and thereafter send the updated
model to the \emph{federation server}.  The federation server
aggregates updates received from its users to improve the global model
for all users.

Although the initial focus of FL has been on targeting millions of
mobile devices~\cite{bonawitz19}, also called \emph{cross-device FL},
the benefits of its architecture are evident even for institutional
settings, also called \emph{cross-silo FL}~\cite{kairouz19}.  While
cross-device FL is concerned with both bandwidth consumption and data
privacy, cross-silo federations and their users are considered well
equipped with resources to handle bandwidth concerns, and data privacy
is the primary objective.  Our work focuses on the cross-silo FL
setting.

Today our world grapples with safely rolling out massive scale
vaccination programs to end a pandemic.  Understanding adverse events
related to these vaccines is critically important. These adverse
events are often expressed in free text form, such as social media
posts and reports provided to health care agencies and pharmaceutical
companies. Currently, mentions of specific adverse events are extracted and coded manually, which is a time consuming, expensive and non-scalable process. Therefore, Machine Learning (ML) based methods to extract
named entities (adverse events) automatically from such unstructured
data are highly desirable.

Typically, more training data yield more
accurate models.  Unfortunately, collecting human annotations for
building such Named Entity Recognition (NER) models is expensive, and
particularly challenging given the need to maintain privacy of health
records. One way to overcome this data scarcity issue would be for
various agencies to share their data to build a joint model with
combined data. However, privacy concerns, government regulation and
data use agreements might not allow the data to leave individual
organizational or geographical silos.  Sharing user data with other
users is absolutely not an option in these settings.

Cross-silo FL makes perfect sense to address such problems.  Each
vaccine provider's data remains in its private \emph{silo}.  At the
same time, the provider can collaborate with other providers on a FL
framework to collectively improve the NER model used for adverse event
detection.  Everyone benefits without violating data privacy.
More specifically, for institutions participating in a federation as
users, restricting data movement helps fulfill contractual obligations
with their customers and comply with legal regulatory constraints on
data movement~\cite{ccpa,gdpr}.

However, restricting the provider's training data to its private silo
does not guarantee complete privacy.  Recent works have demonstrated
that the data can indirectly leak out through model updates shipped by
users to the federation server~\cite{bagdasaryan20,melis18,nasr19}.
To combat this problem, researchers have proposed the addition of
Differential Privacy (DP) ~\cite{dwork06a,dwork14,dwork06} to
FL~\cite{abadi16,geyer17,konecny16,mcmahan17}.

Informally, DP aims to provide a bound on the variation in the model's
output based on the inclusion or exclusion of a single data point used
in its training set.  This is done by introducing precisely calibrated
noise in the training process.  The method of noise calibration and
injection varys between implementations~\cite{abadi16,mcmahan17}, but
is always structured to enforce the precise formal DP guarantee, which
we define in~\autoref{sec:background}. We will refer to this process
as ``DP inducing noise injection'' henceforth.  This noise makes it
difficult, even impossible, to determine whether any particular data
point was used to train the model.

In settings where the federation server is trusted, DP enforcement is
delegated to the federation server~\cite{mcmahan17}.  However, in
settings where users do not trust even the federation server, a
stricter form of DP, called \emph{Local DP}, is
enforced~\cite{kasiviswanathan08}.  While all this noise is structured
to enforce formally provable privacy guarantees for each training data
point~\cite{dwork06}, it can significantly degrade accuracy of model
predictions. This degradation may happen to an extent that
disincentivizes users from participating in the federation -- the
global (noisy) model performs worse than a user-resident local model
trained just on the user's dataset, which we call the
\emph{individual} model.

Another instance where the global model may perform worse than the
individual model for a user is when the user's data distribution is
different from most of the users, or the users collectively have
non-IID training data~\cite{hsieh19,li20}.  There is a rapidly growing
body of FL \emph{Personalization} literature to address this
problem~\cite{dinh20,fallah20,liang20,mansour20,peterson19,yu20}, a
handful of which addresses model degradation due to DP induced
noise~\cite{peterson19,yu20}.

We are interested in applying this body of work to real-world problem
settings.  The health care sector is one such application domain that
can leverage FL in significant ways.  Indeed there is rapidly growing
awareness and investment at world-wide scale including
consortiums~\cite{melloddy} and public-private
partnerships~\cite{imi}.  This is accompanied by the beginnings of
applied research in this sector~\cite{li20a}.

In this paper, we case study application of FL to the problem of
vaccine adverse event detection, the first of its kind to the best of
our knowledge.  Importance of such a study cannot be understated in
today's pandemic stricken world. The unprecedented speed at which new
vaccines have been rolled out, it is crucial to automatically extract
mentions of adverse events related to these vaccines from patient
reports. We study implications of applying FL to the Vaccine Adverse
Event Reporting System (\vaers) dataset that we have annotated and
partitioned by vaccine manufacturers. Each vaccine manufacturer acts
as a federation user whose dataset is siloed in its private sandbox;
all these sandboxes participate in our FL framework over multiple
training rounds.

Our experiments reveal several interesting insights including general
effectiveness of FL on model performance, effects of local DP
enforcement on model performance, and the value of personalization
techniques to incentivize users to participate in FL.  In particular,
we show that FL improves average F1 value by 37.43\% over the
individual model, while enforcement of local DP (DP-FL) degrades the
FL model's average F1 by 25.17\%. For one of the users, this
degradation is so severe that the private FL model F1 is worse by
45.55\% when compared with the individual model F1.  This clearly
makes DP-FL a non-starter for some users to join the federation.  We
study FL with \emph{Fine-Tuning} (FT-FL)~\cite{yu20}, a
personalization approach that fine-tunes the global model at each user
\emph{after} the entire FL training process completes.  Interestingly,
contrary to prior work~\cite{yu20}, simply augmenting fine-tuning to
FL does not result in prediction accuracy improvement for the
federation users.  Instead, user accuracy degrades in most cases.
However, somewhat surprisingly, fine-tuning in the presence of DP
(FT-DP-FL) boosts user accuracy by 24.88\%, compared to the individual
model, to strongly incentivize users to join and stay with the
federation.  We also observe that vaccine reports related to different
manufacturers have slightly different vocabulary (e.g. mentions of
different vaccine names), and different distributions of adverse
events, which aid FT-DP-FL in effectively recovering lost accuracy.

Even more interestingly, our findings indicate a unique
\emph{incentive structure} for users to join the federation.  In
particular, we find that users with small amount of training data,
a.k.a.~\emph{small} users, have a strong incentive to join and stay
with the federation even when local DP is enforced without
fine-tuning.  This is because the user's private dataset is so small
that any locally trained individual model performs poorly.  In
contrast, even the global model that is degraded because of DP
inducing noise performs significantly better than the user's
individual model.  In short, small users have virtually no incentive
to leave the federation, and may not require additional layers of
personalization to improve the global model as long as there are
enough participants in the federation.

For users with larger amount of data, the narrative is quite
different.  In particular, we observe that the global model's
degradation due to DP inducing noise is significant enough to
disincentivize those users from participating in the federation.  As a
result, if they opt for the additional layer of privacy through DP,
the importance of personalization based enhancements, which salvage
the accuracy lost due to DP inducing noise, cannot be understated.

While personalization is gaining traction in the FL research
community, we address a recent concern, called \emph{PAC
  Identifiability}~\cite{london20}, on preserving privacy of the
user-private model (typical of personalization approaches).  In
particular, we present a methodology to prove PAC
\emph{non}-identifiability for a given personalization approach, and
prove that the approach of our choice, FT-FL (and FT-DP-FL), is
\emph{not} PAC identifiable.

In summary, this paper makes the following contributions:

\begin{itemize}
  \item We present the first comprehensive study, to the best of our
    knowledge, on application of FL to the vaccine adverse event
    detection task in the field of pharmacoviligence on real-world
    data -- the \vaers\ dataset.
  \item Our study examines benefits of FL based training, along with
    its robustness to user participation.
  \item We examine challenges posed by enforcement of stricter
    differential privacy, to the extent that may disincentivize users
    from participating in a federation.
  \item We show that, unlike prior work~\cite{yu20}, simply augmenting
    FL with personalization techniques, such as the aforementioned
    fine-tuning (FT-FL), does not necessarily improve prediction
    accuracy for FL users.  In fact, it degrades prediction accuracy
    in our experiments.  However, somewhat surprisingly, the same
    techniques (FT-DP-FL) turn out to be highly effective in
    recovering lost accuracy due to DP inducing noise injection.  We
    furthermore show that personalization is robust to user
    participation uncertainties (e.g. users dropping out).
  \item We report an interesting new \emph{incentive structure}
    amongst users participating in the federation, where users with
    small amount of training data are strongly incentivized to join
    and stay with the federation, whereas users with somewhat larger
    amounts of data require enhancements, such as FT-DP-FL, to
    overcome the pitfalls of DP inducing noise injection.
  \item Another surprising finding in our study is that fine-tuning
    based personalization is highly resilient to increasingly tighter
    margins for the differential privacy budget ($\epsilon < 1$).
  \item Finally we provide a novel methodology to prove that a
    personalization approach is \emph{PAC non-identifiable}.
    Moreover, we prove that the personalization approach of our
    choice, FT-FL (and FT-DP-FL), is PAC non-identifiable.
\end{itemize}

The rest of the paper is structured as follows: We discuss background
material and related work in~\autoref{sec:background}. The VAERS
system used as the basis of this study is described in
~\autoref{sec:vaers}. We describe our NER model used in an adverse
event detection system, along with our FL framework and the
personalization approach we use in~\autoref{sec:method}.  Our
comprehensive experiments and their analysis appears
in~\autoref{sec:experiments}.  This is followed by our formal proof on
PAC non-identifiability for our personalization approach, FT-FL (and
FT-DP-FL) in~\autoref{sec:pac-non-identifiability}, and conclusion
in~\autoref{sec:conclusion}.

 \section{Background}
\label{sec:background}

\paragraph{Federated Learning (FL)}

In FL, a federation server initializes a global model and ships it to
all participating users thereby initiating distributed training.
Training happens over multiple rounds.  In each round, each user, on
receiving the the global model re-trains the model on its private data
and sends back the resulting parameter updates to the federation
server.  The federation server aggregates updates from all users
applying them to the global model, and then ships the revised model
back to the users.  The most widely used method of aggregation is
\fedavg~\cite{konecny15,mcmahan16}, where user parameters updates are
averaged at the federation server and applied to the global model.
Formally, \fedavg\ solves the following optimization problem:
\begin{equation}
\begin{aligned}
  \min_{w \in \mathcal{R}^d} f(w) \hspace{0.25in} where,
  f(w) \defeq \frac{1}{n} \sum_{i=1}^n f_i(w)
\end{aligned}
\end{equation}
The function $f_i = \mathcal{l}(w;x_i,y_i)$ represents the local loss
for each of the $n$ federation users on the model $w$ using the user's
private data ${x_i,y_i}$.

Figure~\ref{fl-picture} shows the overall FL architecture.  Users can
dynamically join the federation or drop out.  The framework is
structured to be resilient to such changes.  Noting privacy concerns,
more recent work has proposed addition of differential privacy to
FL~\cite{geyer17,konecny16,mcmahan16}.

\begin{figure}
\includegraphics[width=\columnwidth]{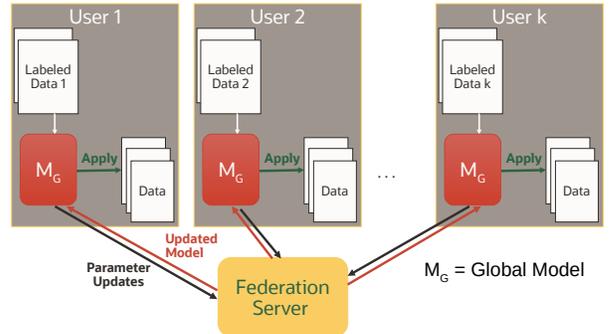}
\vspace{-1.0in}
\caption{The Federated Learning setting. $M_G$ is the global model the
  federation server sends to users, each of which re-trains $M_G$ on
  its private data and sends the updated model parameters back to the
  federation server.}
\label{fl-picture}
\end{figure}

\paragraph{Differential Privacy (DP)}

\emph{Differential privacy}~\cite{dwork06} is a mathematically
quantifiable privacy guarantee for a data set used by a computation
that analyzes it.  While it originally emerged in the database and
data mining communities, triggered by privacy concerns in Machine
Learning
(ML)~\cite{fredrikson15,fredrikson14,hitaj17,korolova10,shokri17,tramer16},
DP has garnered enormous traction in the ML community over the last
decade~\cite{abadi16,carlini19,chaudhuri11,appledp17,dimitrakakis17,fredrikson14,fredrikson15,park16,park16a,sarwate13}.

In DP, the privacy guarantee applies to each individual item in the
data set and is formally specified in terms of a pair of data sets
that differ in at most one item.  Specifically, consider an algorithm
$A$ such that $A: D \mapsto R$, where $D$ and $R$ are respectively the
domain and range of $A$.  Now consider two data sets $d$ and $d'$ that
differ from each other in exactly one data item.  Such data sets are
considered \emph{adjacent} to each other in the DP literature.
Algorithm $A$ is said to be $(\varepsilon,\delta)$-differentially
private if the following condition holds true for all adjacent $d$ and
$d'$ and any subset of outputs $O \subseteq R$:

\begin{equation}
  P[A(d)~\epsilon~O]~\leq~e^{\varepsilon}~P[A(d')~\epsilon~O]~+~\delta
\end{equation}

Enforcement of DP typically translates into introduction of a
``correction'' in algorithm $A$ to ensure that the differential
privacy bound holds for any two adjacent inputs.  This correction is
commonly referred to as the \emph{noise} introduced in the algorithm,
its input, or output to ensure that the
$(\varepsilon,\delta)$-differential privacy bound holds.  While a
disciplined introduction of noise guarantees DP, the noise itself
leads to accuracy degradation in the output produced by $A$.  In the
context of ML, the algorithm is a model being trained using sensitive
private data sets, and accuracy degradation can significantly hamper
the model's utility.

In the FL context, one can enforce DP using two distinct approaches:
(i) \emph{Global DP}, also called \emph{Central DP} in the
literature~\cite{mcmahan17,zhu20}, where users fully trust the
federation server to enforce DP.  The server in turn enforces DP to
obfuscate the \emph{participation} of each user.  (ii) \emph{Local
  DP}~\cite{dp-apple17,duchi13,kasiviswanathan08,truex20}, where users
do not trust the federation server, and enforce DP on the updates
shipped back to the server.  This method of DP enforcement typically
guarantees privacy to finer granularity of individual training data
points~\cite{liu20}.  A FL deployment that is considering addition of
DP must carefully evaluate implications of above approaches for the
intended use cases.

\paragraph{Personalization in FL}

The basic FL algorithm, \fedavg, assumes IID training data
across all FL users.  In fact, it is known to be quite effective in
practice for such data distributions.  However, \fedavg\ may perform
poorly in the presence of non-IID user data~\cite{hsieh19,li20}.  A
recent flurry of research addresses this problem using
\emph{personalization}
techniques~\cite{dinh20,fallah20,liang20,mansour20,peterson19,yu20}
that specialize training at each user, typically in the form of
training an additional local model, or letting the local copy of the
global model ``drift'' from the global model in a constrained fashion.
This enables the local model to fit better to the user's local data
distribution thereby delivering a better performing model.  While the
work in personalization is promising, recent research shows that such
local models are vulnerable to being \emph{PAC Identifiable} (leaked
out) to an adversarial federation server~\cite{london20}.

\paragraph{Adverse Event Mention Extraction}

By some estimates, adverse drug reactions are among the leading causes
of death in the developed world. Reports of adverse events are a
critical source of information for tracking and studying adverse
events associated with medicinal products. However, portions of the
sought information is only available in unstructured format.  The use
of and necessity of automated methods for extracting mentions of drug
adverse events from unstructured text is widely recognized in
pharmacovigilance ~\cite{harpaz14}. Several different
genres of text are tackled in this line of research, including social
media ~\cite{gurulingappa12, Korkontzelos16}, biomedical literature
~\cite{leaman10, winnenburg15}, clinical narratives ~\cite{haerian12,
  lependu13} and drug labels ~\cite{roberts17}. More recently, use of state of the art
deep learning technology for Named Entity Recognition (NER) have been proposed ~\cite{giorgi18}.

\section{Vaccine Adverse Event Reporting System}
\label{sec:vaers}

Drug and vaccine safety surveillance relies predominantly on
spontaneous reporting systems. These systems are comprised of reports
of suspected drug/vaccine adverse events (potential side effects)
collected from healthcare professionals, consumers, and pharmaceutical
companies, and maintained largely by regulatory and health
agencies. Among other, these systems are used to detect possible
safety problems – called ``signals'' – that may be related to a
vaccination or the consumption of a drug. In the US, the prominent
surveillance system for vaccines is the U.S. Centers for Disease
Control and Prevention (CDC) and the Food and Drug Administration
(FDA) Vaccine Adverse Event Reporting System (\vaers), created in
1990.
 
The \vaers\ data (de-identified) is publicly available in structured
format. Each \vaers\ report includes the name of (and additional
information about) the administrated vaccine, a list of adverse events
related to the vaccine, dates, and limited demographic information
about the patient receiving the vaccine (e.g., age,
gender). Importantly, the report also includes a textual narrative
describing the adverse event. For example,
 
\emph{Shortly after patient was vaccinated, she started to feel an
  itching, tingling feeling in her throat.  Fearing that it was an
  allergic reaction, I called 911.  The patient remained alert,
  talking and breathing normally until paramedics arrived, though she
  stated that she started to feel additional tingling in her arms and
  chest.}

Most of the data collected in VAERS is currently processed by humans for downstream applications.  With the rapidly increasing volume of such data this human effort is becoming prohibitive and calls for the increased use of automated methods such as NER. In addition, pharmacovigilance data such as that available in and similar to VAERS originates from private siloed sources,  which motivate the need for privacy preserving distributed approaches such as FL.

\section{Model and Framework}
\label{sec:method}

\subsection{NER based on Recurrent Neural Networks}
 
The recurrent neural network (RNN) architecture we used to perform NER
is based on a commonly applied BiLSTM architecture.
The architecture consists of three major components: (1) a word
representation layer made of word embeddings, (2) two stacked layers
of bidirectional long short-term memory (LSTM) cells, and (3) a
feedforward layer that performs the final BIO sequence labeling.
 
Pre-trained word embeddings were used to seed the network’s word
embedding layer. These were generated using Word2Vec applied to the
sentences comprising the VAERS NER dataset described in ~\autoref{sec:experiments}. Dropout regularization was implemented between each of the three major network
components. The dropout rate was 0.4.
 
The network was implemented on PyTorch6 and trained using stochastic
mini-batch gradient descent with the Adam optimizer for a pre-defined
number of iterations. Each iteration processed a batch of 256 randomly
selected sentences. The network was trained for a total of 20 epochs,
each epoch consisting of number of sentences in the training set /
batch size iterations.

\subsection{Federated Learning Framework}

We have implemented our own FL simulation framework, on PyTorch6,
that hosts the federation server and users on the same computer.  The
framework supports several federated aggregation protocols, including
\fedavg\ and \fedsgd~\cite{konecny15}, of which we use \fedavg\ in our
evaluation.  The framework is extendable to support other custom
aggregation
protocols~\cite{dinh20,fallah20,liang20,peterson19,yu20}. 


\subsubsection{Trust Model Considerations and Differential Privacy}

The decision to train a ML model using the FL framework requires
careful analysis of privacy considerations for users' data.  More
specifically, the \emph{meaning} of the term ``data privacy'' in a
given setting needs to be precisely understood since it has profound
implications on techniques required to enforce the desired data
privacy.  For instance, in some settings, simply restricting user data
to its private silo is sufficient for the use case.  On the other
hand, in settings involving highly sensitive private data (e.g. health
records of individuals), it may be desirable to ensure that even the
parameter updates shipped from the user silo to the federation server
cannot be reverse engineered by any means, external to the user, to
determine the user's training data records.  Ultimately, the level of
privacy protection must be agreed upon by all parties involved.  While an exhaustive treatment of a
taxonomy of such \emph{trust models} in FL is beyond the scope of this
paper, we assume that personal health records describing an adverse reaction to a vaccine are highly sensitive
private material.  Consequently, they must be protected using
techniques guaranteeing the strictest data privacy.

In the FL setting, these data records would be hosted in a participating
pharmaceutical company's silo.  The pharmaceutical company's silo
performs the role of a user in the federation.  We view Differential
Privacy (DP) as an appropriate tool to enforce privacy guarantees to
individuals' health records.  However, more careful analysis of how DP
is enforced in FL settings is required.

We have already discussed two choices available: (i) global DP, and
(ii) local DP.  Other technologies such as secure multi-party
computation~\cite{yao86} and homomorphic encryption~\cite{gentry09}
may be worth considering, but are beyond the scope of this work.
Additional security technologies such as end-to-end encryption may be
necessary to augument to the DP solution, but is also outside the
scope of this work.  The question with DP choices is: which approach
is applicable in our setting?  In principle, global DP can be used in
such a setting, provided the FL service provider is under a legally
binding contract to preserve privacy of the parameter updates, thus
indirectly preserving privacy of the training data, received from the
user.  However, such arrangements may not be applicable to contexts
where movement of such noise-free parameter updates is restricted by
government regulations or due to disparate privacy laws across
different jurisdictions~\cite{ccpa,gdpr}.  As a result, local DP may
be far more attractive in such tightly constrained settings.

As stated earlier, adverse reactions to vaccines is highly sensitive
private data.  Futhermore, the incentive to collaborate across
multiple jurisdictions, even continents, is extremely alluring due to
the ongoing pandemic at a global scale.  As a result, local DP is the
privacy enforcement mechanism that is best suited to our setting.

To enforce local DP, we use the algorithm proposed by Abadi et
al.~\cite{abadi16} that injects gaussian noise (calculated using their
moments accountant algorithm) in parameter gradients during local
training at each user.  Nosiy gradients lead to noisy parameter
updates, which are eventually shipped from the user to the federation
server.

Interestingly, since users can possess datasets with different sizes,
the computed noise, which is a function of the dataset size, varys
considerably from user to user.  For instance, the noise introduced
for a user with a handful of data points is much higher than the noise
introduced by a user with a much larger private dataset.  However,
\fedavg\ smoothes out the noisy updates through the parameter
aggregation process (averaging, in our case).  The resulting model
that each user receives is much more robust.


\subsubsection{Personalization through Fine Tuning}

The main allure of FL for a user is the promise of significant
prediction accuracy improvements over a locally trained
\emph{individual} model.  While parameter aggregation through FL can
significantly improve accuracy of the global model, introduction of
noise to enforce DP can severely compromise that improvement.  The
degradation can be severe enough to make users reconsider their
decision to join the federation, and deter new users from joining the
federation.  Furthermore, data distributions across users may have
significant side effects on the global model's prediction accuracy: If
a user's dataset has a significantly different distribution than most
of the federation users, the global model may perform worse than a
locally trained individual model.  If users of a federation have
non-IID data, the resulting global model may be
ineffective~\cite{li20}.

Many researchers have recently proposed different forms of
\emph{personalization} approaches to remedy the disparate data
distribution
problem~\cite{arivazhagan19,deng20,dinh20,fallah20,haddahpour19,
  hanzely20,jiang19,liang20,mansour20,peterson19,smith2017,yu20}.
Just two of these works~\cite{peterson19,yu20}, to the best of our
knowledge, propose personalization approaches as solutions to model
degradation due to DP inducing noise.  Among the proposed
personalization approaches, we focus on FL with \emph{Fine
  Tuning}~\cite{yu20}: FT-FL for fine tuning on top of plain FL, and
FT-DP-FL for fine tuning on top of FL with local DP enforcement.  In
this approach each user continues training, without noise, the local
copy of the global differentially private model \emph{after} the FL
training process has completed.

The fine tuning based parameter updates are private to each user and
are not shared with the federation.  As a result, the fine tuned local
models may diverge from the global model at varying degrees in order
to better fit the users' private data.  While endlessly fine tuning
the global model can lead to the model converging to a locally trained
individual model, care must be taken to ensure that the fine-tuned
model does not deteriorate.  This can be achieved through standard
hyperparameter tuning techniques.

\section{Experiments}
\label{sec:experiments}

\subsection{Dataset}

We used a total of 17,841 narratives submitted to \vaers\ through the
years 2015-2017 to form the NER data set used for this study. The
narratives were automatically annotated for adverse event named
entities using the list of adverse events supplied with each
report. In total the NER data set used for this study comprised of
87,730 sentences and 39,139 annotated adverse event named
entities. Table \ref{dataset} describes the dataset in detail. In our
experiments, we split the data randomly into train, validation, tune
and test sets in the proportion 60\%, 10\%, 10\%, and 20\%
respectively. We used the validation set to decide early stopping in
the fine tuning algorithm and tuned the rest of parameters on the tune
set. We refer to ``large manufacturers'' as those with more than 1000
\vaers\ reports in this data and ``small manufacturers'' as those with
fewer reports to reflect the availability of training data in each
user's silo. In the rest of this paper, we use the terms
`manufacturer' and `user' interchangeably.

\begin{table}
\footnotesize{
\centering
\begin{tabular}{|l|c|c|c|}
\hline
\textbf{Vaccine} & \textbf{Num \vaers} & \textbf{Sentences} & \textbf{Entities}\\
\textbf{Manufacturer} & \textbf{Reports} && \\
\hline
Merck  Co. Inc. &     7638 & 42207 &15501\\
Sanofi Pasteur &     3352&12688&8071 \\
Pfizer-Wyeth &     2428 &10848&5607 \\
Glaxo-Smithkline &    2289 &11366&5186\\
Biologicals &     & & \\
Novartis Vaccines  & 1183&6664& 2648\\
And Diagnostics  & & & \\
CSL Limited & 465 &1751& 1088\\
Medimmune Vaccines & 265 &1413&498\\
Inc. & & & \\
Seqirus Inc. & 111 &426&252\\
Emergent Biosolutions &58 &206&146 \\
Berna Biotech Ltd.& 52&161&142 \\
\hline
\end{tabular}
}
\caption{ \footnotesize{
\vaers\ Dataset.`Vaccine Manufacturer’ is a field in
the public \vaers\ database that identifies the manufacturer of the
vaccine reported in the \vaers\ form. There is no relationship between
this field and the reporter. ‘Num \vaers\ Reports’ does not represent
the rate of adverse events associated with the manufacturer or its
products and cannot be used to estimate such rates. The statistics are
based on a sample of reports submitted to \vaers\ between 2015-2017
whose MedDra coded adverse events appeared in the narrative. Because
the statistics are based on a carefully selected sample, the
distribution of reports shown may not represent the true distribution
of reports associated with different vaccine manufacturers. 
}}
\vspace{-0.4in}
\label{dataset}
\end{table}

\vspace{-0.1in}
\subsection {Experimental Setup}

As the first baseline for our experiments, we train Individual models
(\ind), i.e. assume that each manufacturer only uses their own
training set, and test on their respective test set. This baseline
represents the case in which the manufacturer chooses not to
participate in the federation at all. \emph{FL} is the federated
learning model trained in a collaborative fashion across users using
the \fedavg\ algorithm. This model is then fine tuned for each user
using the protocol described in~\autoref{sec:method}, which yield a
set of models, one per manufacturer, that we call \emph {FT}. Next, we
introduce local differential privacy to the \emph{FL} model, as
described in~\autoref{sec:method}. We use $\epsilon = 2.0$ for this
first set of experiments as it is considered a fairly conservative
privacy setting in the literature~\cite{abadi16} and calculate the
sigma values suitable per user.
We call this private federated learning variant \emph{DP-FL}. Finally,
we fine tune this private FL model and call it \emph{FT-DP-FL}.

The training parameters for all of these algorithms were tuned using a
separate tuning dataset. We use a learning rate of 0.01 and train all
the federated models for 20 rounds of \fedavg, with additional 20
epochs for the fine tuning variants at each manufacturer. For
evaluation, we compute the precision, recall, and F1 of each token
label on a 1-vs-all basis. The values reported are the mean F1 score
(henceforth called F1) for the labels at the beginning or inside of an
adverse event mention.

We ask the following questions as part of this study. Does
\fl\ perform better than \ind\ models across users? What happens
when differential privacy is introduced?  Does personalization help
improve accuracy over \fl\ and mitigate \dpfl{}'s accuracy loss enough
to re-incentivize users to participate in the federation?  If
fine-tuning based personalization helps mitigate accuracy loss due to
DP, how robust is it to varying parameters of DP?  Finally, we ask if
the federation is stable enough for the uncertainties of real world,
such as users dropping out?  We also analyze the incentive structure
that emerges for users with varying amounts of training data.

\vspace{-0.1in}
\subsection{Private Federated Learning with Personalization}

Figure \ref{large-fed-results} shows the F1 values for each of the
described models on the individual users' test sets. Note that the
manufacturers on the $x$-axis are sorted based on the size of their
training sets. As we
can see, the FL model consistently outperforms \ind\ models for each
of the users, including large manufacturers with a lot of training
data. As table \ref{large-fed-results-table} shows, the
amount of error reduction over the \ind\ model for each user is substantial. 
Contrary to findings by Yu et. al.~\cite{yu20}, in our case,
personalization based on fine tuning \ftfl\ performs worse than
\fl\ in most cases. As we add noise related to differential privacy to
the federated learning model, F1 values drop significantly across the
board. This makes participation for larger manufacturers in the
federation unattractive, since the \dpfl\ model ends up performing
worse than their \ind\ models. However, applying fine tuning in this
case helps bring it back up to the point, where it is again
advantageous for each party to participate in the federation. This
shows that personalization based approach can help mitigate the loss
of accuracy from introducing differential privacy.

It is interesting to note that for small manufacturers, with an
exception of one with very small amount of evaluation data, it is
always beneficial to participate in the federation, even for \dpfl,
with or without personalization.  For large manufacturers however, the
DP is only attractive in the presence of the mitigation offered by
fine-tuning based personalization (\ftdpfl).

\begin{figure}[t]
\includegraphics[width=\linewidth]{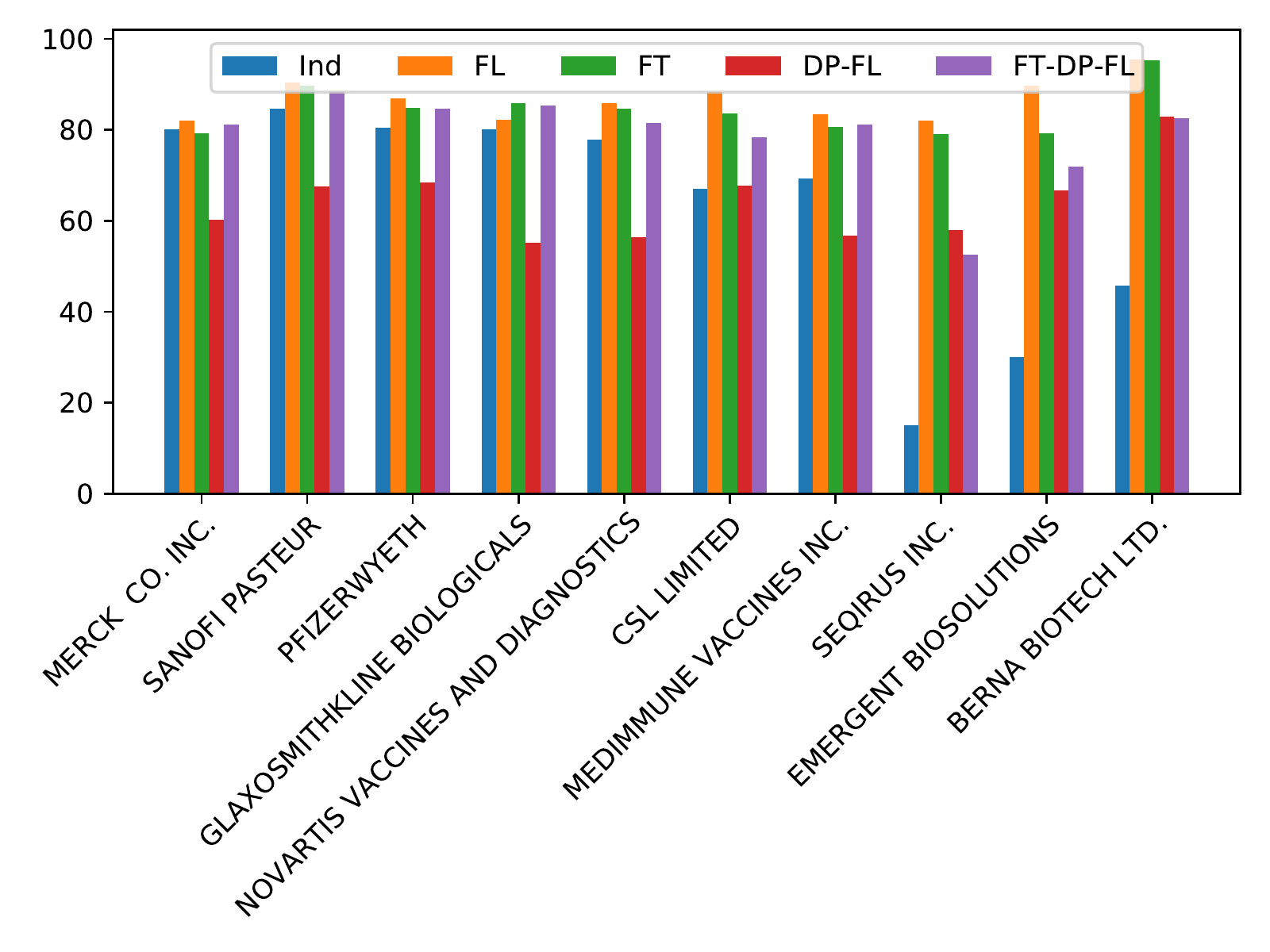}
\vspace{-0.4in}
\caption{\footnotesize{F1 per manufacturer for different methods for $\epsilon=2.0$}}
\label{large-fed-results}
\end{figure}

\begin{table}
\footnotesize{
\centering
\begin{tabular}{|l|c|c|c|c|c|}
\hline
\textbf{Vaccine} & \textbf{Individual} &\multicolumn{2}{|c|}{\bfseries FL} & \multicolumn{2}{|c|}{\bfseries FT-DP-FL}\\
\textbf{Manufactuer} & \textbf{F1} &\multicolumn{2}{|c|}{\bfseries } & \multicolumn{2}{|c|}{\bfseries }\\\hline
& & \textbf{F1} & \textbf{Error Red.} & \textbf{F1} & \textbf{Error Red.}\\
	
\hline
Merck Co. Inc. & 80.10 & 82.00 & 9.55\% & 81.20 & 5.53\%\\
Sanofi Pasteur & 84.60 & 90.40 & 37.66\% & 88.40 & 24.68\%\\
Pfizer-Wyeth & 80.50 & 87.00 & 33.33\% & 84.60 & 21.03\%\\
Glaxo-Smithkline & 80.20 & 82.20 & 10.10\% & 85.30 & 25.76\%\\
Biologicals &  &  &  &  & \\
Novartis Vaccines & 77.80 & 85.80 & 36.04\% & 81.50 & 16.67\%\\
And Diagnostics &  &  &  &  & \\
CSL Limited & 67.10 & 88.50 & 65.05\% & 78.30 & 34.04\%\\
Medimmune  & 69.30 & 83.50 & 46.25\% & 81.10 & 38.44\%\\
Vaccines Inc.&  &  &  &  & \\
Seqirus Inc. & 15.00 & 82.10 & 78.94\% & 52.60 & 44.24\%\\
Emergent & 30.10 & 89.70 & 85.26\% & 71.90 & 59.80\%\\
Biosolutions&  &  &  &  & \\
Berna Biotech & 45.80 & 95.40 & 91.51\% & 82.50 & 67.71\%\\
Ltd. &  &  &  &  & \\
\hline
\end{tabular}
}
\caption{\footnotesize{
F1 and Error Reduction with Federated Learning and Private Federated Learning with Fine Tuning
}}
\vspace{-0.35in}
\label{large-fed-results-table}
\end{table}
\subsection{Robustness to Differential Privacy Noise}
Next, we study the effectiveness of personalization in recovering from
the accuracy loss resulting from differential privacy noise. We vary
the parameter $\epsilon$ and measure F1 averaged across users for two
of the algorithm variants: differentially private federated learning
(DP-FL) and the fine tuned differentially private federated learning
(FT-DP-FL). As we can see from~\autoref{effect-of-epsilon}, average F1
for DP-FL deteriorates significantly for values of $\epsilon$ less
than 2. However, even in these cases, the personalized version,
FT-DP-FL manages to retain its performance. We believe this is an
important finding that provides significant latitude to differentially
private FL frameworks to further tighten the privacy budget of
$\epsilon$ without compromising utility.

\begin{figure}[t]
\includegraphics[width=2.2in]{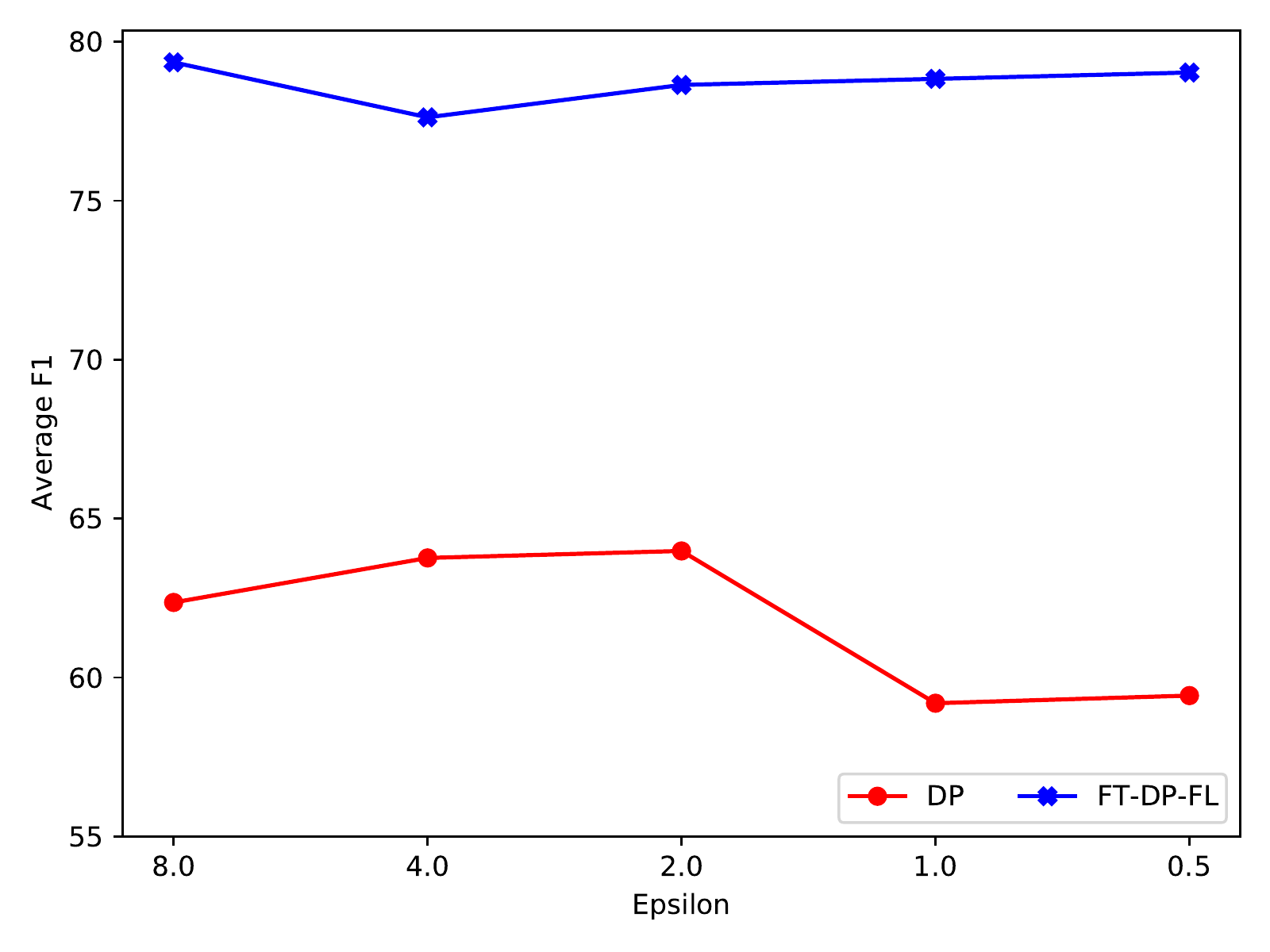}
\vspace{-0.2in}
\caption{\footnotesize{Average F1 across users for the two differentially private FL variants.}}
\vspace{-0.1in}
\label{effect-of-epsilon}
\end{figure}

\subsection{Stability of Federation against Users Leaving}

Building a federation across organizations can be challenging in the
real world due to a variety of factors. For instance, users may
discontinue their participation in the federation.
We simulate this scenario and study the effect of
one of the manufacturers leaving the federation. As we can see from
Tables~\ref{leave-one-out-FL} and~\ref {leave-one-out-FT-DP-FL}, both
federated learning and private federated learning with fine tuning are
fairly stable against such a change, with the exception of a few
manufacturers with very small amount of training and test data. In
other words, no single manufacturer has disproportionally large impact
on the overall accuracy gains from participating in the federation.

\begin{table}
\footnotesize{
\centering
\begin{tabular}{|c|c|c|c|c|c|c|c|c|c|c|}
\hline
&M1 & M2 & M3& M4& M5& M6& M7& M8& M9& M10\\\hline

M1&0.0&0.9&1.8&0.4&1.0&2.1&1.8&0.4&1.0&0.0\\

M2&-0.3&0.0&0.4&0.5&1.4&1.6&1.6&-0.4&3.2&-1.5\\

M3&-0.1&0.5&0.0&0.1&0.1&0.9&1.4&1.9&1.0&-1.5\\

M4&-0.6&0.8&0.2&0.0&2.6&-0.2&3.5&1.3&1.0&0.0\\

M5&-0.5&-0.1&-0.1&2.9&0.0&0.6&0.6&-1.9&1.0&0.0\\

M6&-0.8&0.0&0.2&-0.5&-0.4&0.0&1.6&-1.1&2.1&0.0\\

M7&-0.5&0.5&-0.3&-0.5&0.1&0.7&0.0&0.4&1.0&-1.5\\

M8&-0.7&0.3&0.3&-0.1&-0.5&0.0&-0.5&0.0&0.8&0.0\\

M9&-0.4&0.1&0.2&0.0&0.4&0.1&0.9&0.9&0.0&4.5\\

M10&-1.0&0.0&-0.2&-0.2&-0.2&0.3&-1.3&-1.1&0.0&0.0\\

\hline
\end{tabular}
}
\caption{\footnotesize{Stability of FL performance when a single user leaves. M1-M10
  are manufacturers sorted in descending order by size. Each row
  represents a manufacturer that is leaving the federation. Each
  Column represents the difference between F1 values under full
  federation and this reduced federation for that manufacturer. }}
\vspace{-0.3in}
\label{leave-one-out-FL}
\end{table}

\begin{table}
\footnotesize{
\centering
\begin{tabular}{|c|c|c|c|c|c|c|c|c|c|c|}
\hline
&M1 & M2 & M3& M4& M5& M6& M7& M8& M9& M10\\\hline

M1&0.0&0.1&0.4&1.9&-2.4&1.4&2.9&-8.3&0.3&15.8\\

M2&-0.1&0.0&0.6&1.6&-1.5&-1.6&0.5&-2.5&1.4&22.5\\

M3&0.5&0.5&0.0&2.1&-1.7&0.2&-1.3&-1.2&-1.2&3.7\\

M4&-0.3&-0.3&0.2&0.0&-0.1&-4.3&0.7&-1.3&-0.4&18.7\\

M5&-0.1&0.0&-0.3&1.0&0.0&-0.3&-0.3&-1.9&-0.8&0.5\\

M6&-0.2&-0.5&0.3&1.6&-1.9&0.0&-1.5&-0.3&-0.5&4.2\\

M7&-0.5&0.1&0.3&2.2&-1.2&-2.8&0.0&-0.5&0.9&28.9\\

M8&0.5&-0.5&0.8&0.6&0.0&-4.0&-0.9&0.0&5.2&15.8\\

M9&-0.5&-0.5&0.3&1.0&-2.5&-3.3&-3.5&-2.4&0.0&4.1\\

M10&-0.1&-0.2&1.0&0.9&-1.8&-3.2&-0.1&-1.4&2.2&0.0\\

\hline
\end{tabular}
}
\caption{\footnotesize{Stability of Private FL with Fine Tuning performance when a
  single user leaves. M1-M10 are manufacturers sorted in descending
  order by size. Each row represents a manufacturer that is leaving
  the federation. Each Column represents the difference between F1
  values under full federation and this reduced federation for that
  manufacturer.}}
\vspace{-0.4in}
\label{leave-one-out-FT-DP-FL}
\end{table}

\subsection{Federation of Small Manufacturers}

Another scenario that we simulate is the one where only participants
with small amount of training data agree to collaborate. In this case,
we do not have the advantage of the large amount of training data from
any of the larger manufacturers.
To better understand if such a
federation is still advantageous, we compare the F1 values for small
manufacturers in two different scenarios: one, in which they are a
part of a large federation with all manufacturers, and second, in
which they are a part of a federation with only the small
manufacturers.  Figures~\ref{small-vs-large-fed-fl}
and~\ref{small-vs-large-fed-ft-dp-fl} show these comparisons for FL
and FT-DP-FL respectively. As is clear from the bar chart, even in the
case of a federation with just the small manufacturers, most of the
manufacturers benefit significantly from participating. In fact, the
performance of all manufacturers in the small federation closely
tracks their performance in the large federation, with one exception.

\begin{figure}[t]
\includegraphics[width=2.5in]{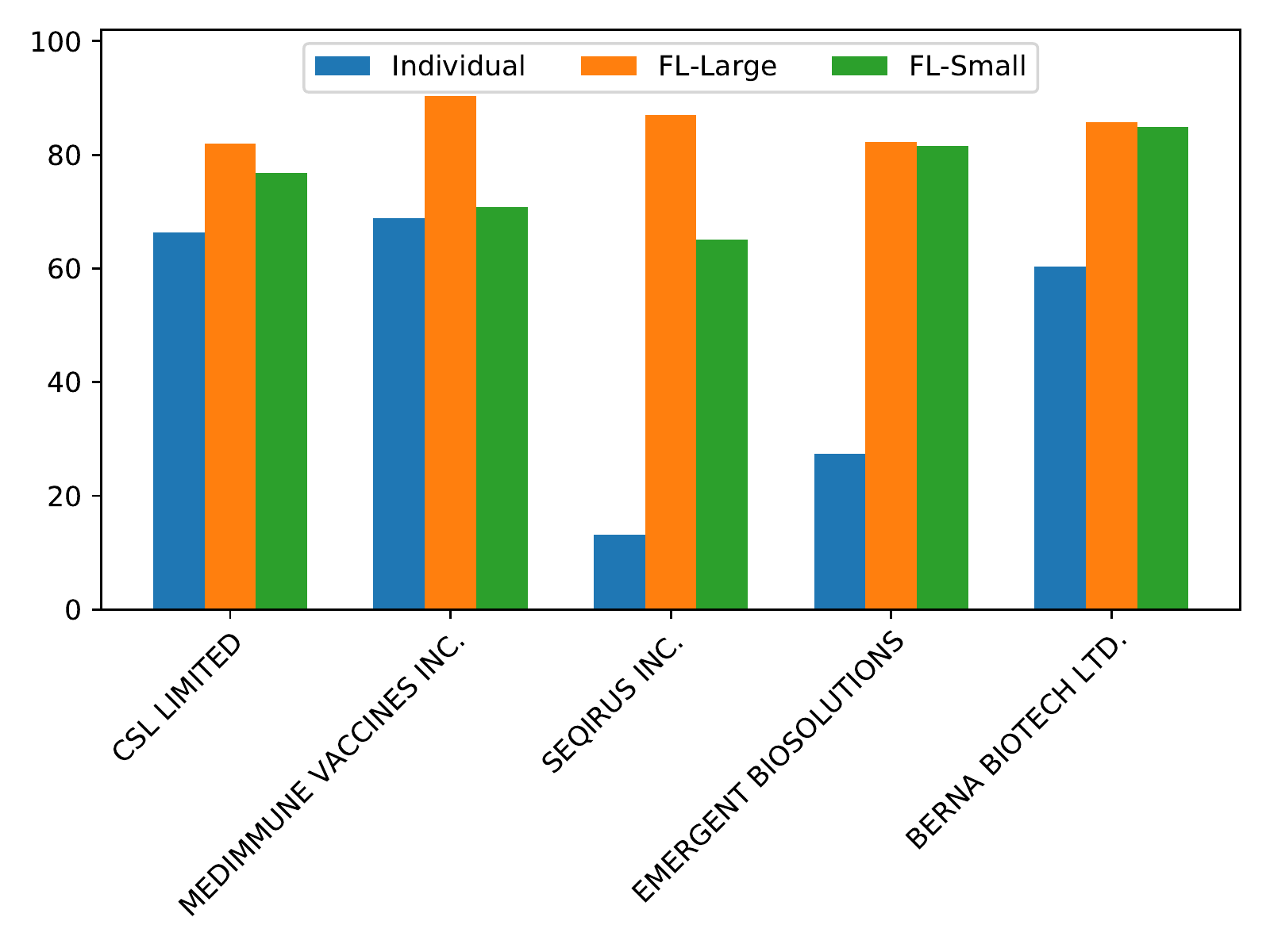}
\vspace{-0.28in}
\caption{\footnotesize{Comparison of FL F1 for small manufacturers when they are a part of a larger federation vs. a federation of only small manufacturers.}}
\label{small-vs-large-fed-fl}
\end{figure}

\begin{figure}[t]
\vspace{-0.2in}
\includegraphics[width=2.5in]{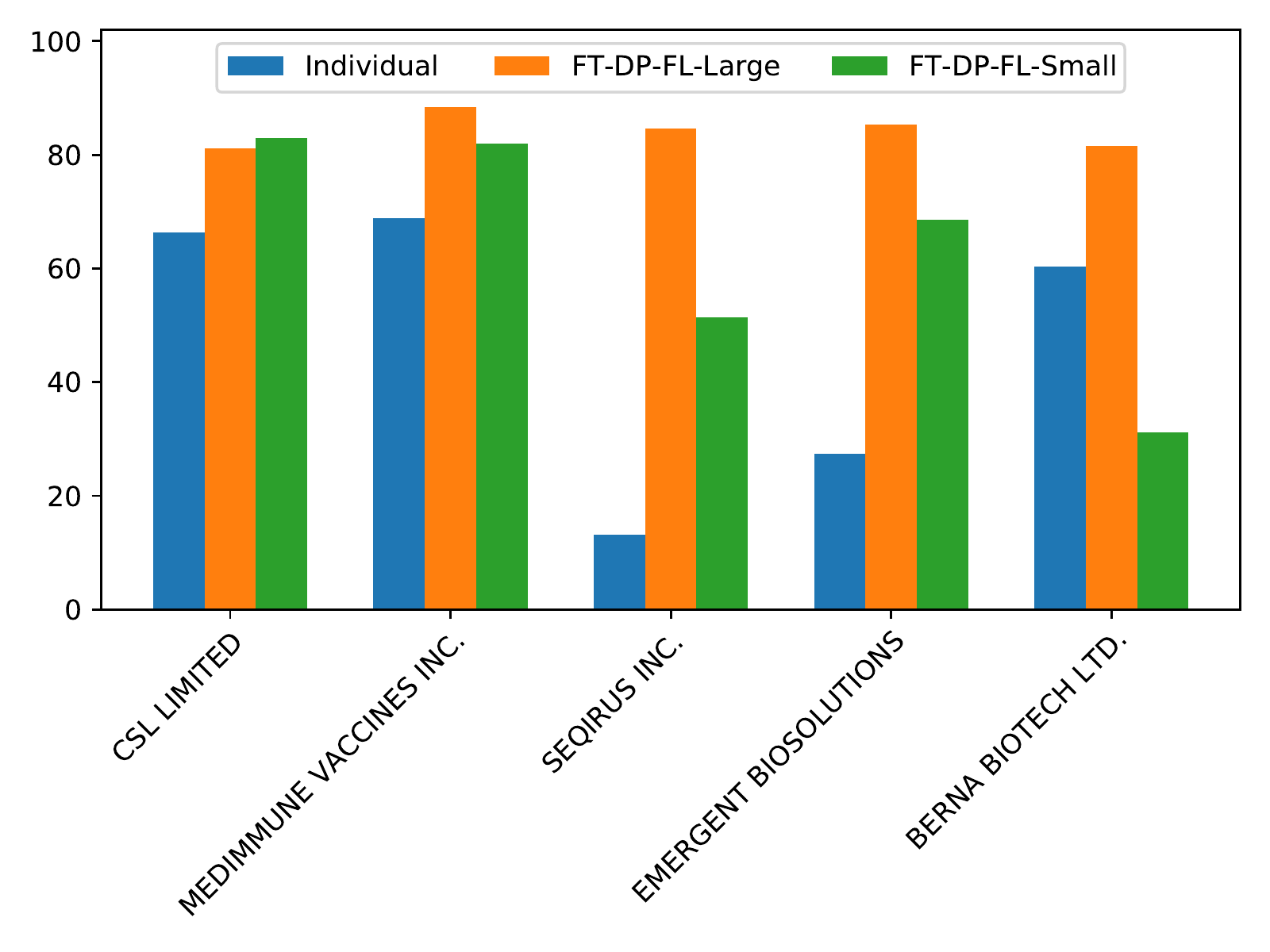}
\vspace{-0.25in}
\caption{\footnotesize{Comparison of FT-DP-FL F1 for small manufacturers when they are a part of a larger federation vs. a federation of only small manufacturers.}}
\vspace{-0.2in}
\label{small-vs-large-fed-ft-dp-fl}
\end{figure}

\subsection{Training Time}
 
All experiments were run on the Oracle Cloud Infrastructure cluster of
Tesla V100 GPUs running a job scheduling software. The GPUs were
either 1, 2 or 8 core, with 90G, 180G, 768G memory respectively. Here
we report the actual wall clock time for training different variants
of federated learning. Training the FL model took 7.95 minutes, while
training it and tuning it for each of the users in a serial fashion
took a total of 17.10 minutes. The DP-FL model took 505.84 minutes to
train by itself and 559.34 minutes with fine tuning.  The DP models
took over an order of magnitude of training time because during
training the DP noise injection code path computes and clips gradients
of individual data points in a training mini-batch before applying
gaussian noise to the averaged mini-batch gradients.  This is
necessary to ensure that the training algorithm respects the allotted
$\epsilon$ privacy budget over the training process.  Parallelization
of this component of our system using Goodfellow's
technique~\cite{goodfellow15} is the subject of future work.

\vspace{-0.1in}
\section{PAC Non-Identifiability}
\label{sec:pac-non-identifiability}

Inspired by Probably Approximately Correct (PAC) learning, London
recently introduced \emph{PAC Identifiability}~\cite{london20}, a new
privacy condition relevant to personalization in
FL~\cite{dinh20,fallah20,liang20,mansour20,peterson19,yu20}.
Informally, in a personalized FL setting, PAC identifiability
determines whether the private model used by a federation's user can
be leaked out to an adversarial federation server.  Learning a user's
private model can fundamentally compromise the user's privacy.  London
presents the case study of recommender systems, where the federation
server may be able to determine ratings choices made by a targeted
user.  It is critical for user privacy to determine if a given
personalization approach's user-local (private) model is PAC
identifiable.  To that end we now prove that FT-FL is not PAC
identifiable.

Let $G$ be the global model containing parameters $p_1,p_2,...,p_n$.
Let $L_u$ be the local (private) model for user $u$, and $D_u$ denote
the user's private data used to train $G$ and $L_u$.  We can
w.l.o.g. represent personalization in FL at user $u$ as follows:

\begin{equation}
  \Delta p_u = L_u(D_u) \oplus G(D_u)
\label{eq:pac1}
\end{equation}

where $\oplus$ is the personalization specific operator (algorithm)
that combines the local and global models' outputs to yield $\Delta
p_u$, the update to $G$'s parameters that is shipped back to the
federation server.

We use London's definition of PAC identifiability in his restricted
context of binary classification for a recommender system in our
proof.  However, our proof can be easily generalized to a richer
definition of PAC identifiability.
\begin{definition}
  A user $u$, using a given protocol (which may be stochastic), is
  \emph{PAC Identifiable} if, for any $\epsilon \in (0,1)$ and $\delta
  \in (0,1)$, with probability at least $1 - \delta$ over $T =
  poly(\epsilon^{-1},\delta^{-1})$ interactions with the server, the
  server can output an estimate $\hat{L}_u$, of the local model (after
  interaction), $L_u$, such that
  
  \hspace{0.2in}$1\over{I}$$\sum_{i \in \mathcal{I}}
  \mathbb{1}\{\hat{L}_u(D_u) \oplus G(D_u) \neq L_u(D_u) \oplus
  G(D_u)\} \le \epsilon$.
\label{def:pac-identifiability}
\end{definition}
\noindent where, $\mathcal{I}$ is the set of $I$ items from the
catalog in the recommender system.  Informally, PAC identifiability
puts an upper bound $\epsilon$ on the number of disagreements between
the user's local model $L_u$, and its estimate $\hat{L}_u$ predicted
by an adversarial federation server.  In such cases, we say that
models $L_u$ and $\hat{L}_u$ are \emph{similar}.

While London~\cite{london20} describes a simple PAC identifiability
mechanism (protocol), more sophisticated mechanisms will be proposed
by researchers in the future.  Our proof of FT-FL's PAC
\emph{non-identifiability} is agnostic to such mechanisms.

Formally, let $A_G$ be a mechanism employed by the adversarial
federation server such that
\begin{equation}
  A_G(\Delta p_u) \triangleq \hat{L}_u
  \label{eq:pac2}
\end{equation}
where $\hat{L}_u$ is the estimate of $L_u$.  We say that $A_G$ is the
\emph{PAC identifiability mechanism} for $L_u$.

Clearly, in the process of deriving $\hat{L}_u$, $A_G(\Delta p_u)$
eliminates $G(D_u)$ or its effects from Equation~\ref{eq:pac1}.  Let
us call that operation $G^-$.  Therefore,
\begin{equation}
  G^-(\Delta p_u) = L_u(D_u) + \gamma
\label{eq:pac3}
\end{equation}
where $\gamma$ is the noise introduced by $G^-$ in the process of
eliminating the effects of $G(D_u)$ on $\Delta p_u$.  Thus,
\begin{equation}
  A_G(\Delta p_u) = U^-(G^-(\Delta p_u))
\label{eq:pac4}
\end{equation}
where $U^-$ maps $L_u(D_u) + \gamma$ to $\hat{L}_u$.  $\gamma$ must be
negligible enough to allow the PAC identifiability condition
(\ref{def:pac-identifiability}) to be satisfied.

\begin{lemma}
  In any setting where a model is trained by FL, and users fine-tune
  the model within their silo after training is complete, the
  adversarial federation server's PAC identifiability mechanism,
  $A_G$, yields a model that is similar to the \emph{\textsf{Null}}
  model (the model with all its parameters set to the value $0$):

  \hspace{1.0in}$A_G(\Delta p_u) = \mathcal{O}$
\end{lemma}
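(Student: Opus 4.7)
The plan is to exploit the timing structure of FT-FL: fine-tuning is applied strictly \emph{after} the federated training process has terminated, so every message ever sent from user $u$ to the server during the $T$ interactions of Definition~\ref{def:pac-identifiability} is produced before $L_u$ has begun to influence anything. I would start by instantiating Equation~\ref{eq:pac1} in the FT-FL regime: during each FL round $t = 1, \dots, T$, the update transmitted is $\Delta p_u^{(t)} = G^{(t)}(D_u)$ alone, because the personalization operator $\oplus$ has not yet been invoked; symbolically one can write $L_u^{(t)} = \mathcal{O}$ for all $t \le T$, making $L_u^{(t)}(D_u) \oplus G^{(t)}(D_u) = G^{(t)}(D_u)$.

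Next I would feed this expression into Equations~\ref{eq:pac3}--\ref{eq:pac4}. Applying $G^-$ to a pure-$G$ update gives $G^-(\Delta p_u^{(t)}) = \mathcal{O} + \gamma$, i.e.\ only the residual noise $\gamma$ of the elimination step survives. By the hypothesis of the PAC identifiability mechanism, $\gamma$ must be negligible (otherwise the server could never satisfy the $\epsilon$-closeness bound in Definition~\ref{def:pac-identifiability} even in easier regimes), so $U^-$ is forced to map a signal arbitrarily close to $\mathcal{O}$ to some estimate $\hat{L}_u$. The natural way to finish is a short information-theoretic / data-processing argument: since the joint distribution of $\{\Delta p_u^{(t)}\}_{t \le T}$ is independent of the post-training fine-tuning outcome, no deterministic or stochastic $A_G$ applied to these transcripts can output anything whose law depends on $L_u$; its output is therefore indistinguishable, up to $\gamma$, from what it would produce on the null input, namely $\mathcal{O}$.

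I would close by arguing that the resulting $\hat{L}_u = \mathcal{O}$ is \emph{similar} to $L_u$ in the technical sense of Definition~\ref{def:pac-identifiability} only for users whose fine-tuning produced a near-null drift from $G$ (a degenerate case), and that the lemma's conclusion $A_G(\Delta p_u) = \mathcal{O}$ should be read as ``the server's best-possible estimate collapses to the null model,'' which in turn will be used (presumably in a follow-up theorem) to conclude PAC non-identifiability of FT-FL: the $\epsilon$-agreement criterion will fail whenever $L_u$ deviates non-trivially from $\mathcal{O}$ on a non-negligible fraction of items.

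The main obstacle I anticipate is the formalism around ``no information flow.'' The definition of $A_G$ in Equation~\ref{eq:pac2} is written as a function of a single $\Delta p_u$, but Definition~\ref{def:pac-identifiability} allows $T$ rounds of interaction, so I need to make precise that none of the $T$ transcripts carries any statistic of $L_u$; a clean way is to exhibit the random variable $L_u$ as conditionally independent of $\{\Delta p_u^{(t)}\}_{t \le T}$ given the pre-fine-tuning state, and then invoke the data-processing inequality to conclude that $A_G$'s output distribution cannot track $L_u$ any better than the constant map $\Delta p_u \mapsto \mathcal{O}$.
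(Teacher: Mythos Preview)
Your proposal is correct and follows essentially the same route as the paper: both hinge on the timing observation that in FT-FL the local model $L_u$ is computed only after all FL interactions have concluded, so $\Delta p_u = \bar{0} \oplus G(D_u) = G(D_u)$, whence $G^-(\Delta p_u) = \bar{0} + \gamma$ and $U^-$ maps this to (something similar to) the null model $\mathcal{O}$. The paper's proof stops there; your additional multi-round bookkeeping and the data-processing/independence argument are extra rigor that the paper does not spell out, but they do not change the underlying approach.
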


\begin{proof}
  As per Equation~\ref{eq:pac1}

  \hspace{1.0in}$\Delta p_u = L_u(D_u) \oplus G(D_u)$

  In case of FT-FL, $L_u(D_u)$ is completely missing from the above
  composition that yields the parameter update $\Delta p_u$.  In fact,
  $L_u(D_u)$ is computed \emph{after} the entire FL training process
  completes.  Recall however, that $L_u(D_u)$ is used by user $u$
  privately to make its post-training local predictions.  In effect,

  \hspace{1.0in}$\Delta p_u = \bar{0} \oplus G(D_u)$

  In fact, $\Delta p_u = G(D_u)$.  As a result, Equation~\ref{eq:pac3}
  evaluates to

  \hspace{1.0in}$G^-(\Delta p_u) = \bar{0} + \gamma$
  
  and as $U^-$ maps $\bar{0} + \gamma$ to $\hat{L}_u$, the latter is
  similar to the \textsf{Null} model $\mathcal{O}$.
  
\end{proof}

The following corollaries follow

\begin{corollary}
  FT-FL is not PAC identifiable.
\end{corollary}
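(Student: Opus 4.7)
The plan is to derive the corollary directly from the preceding Lemma by showing that the adversary's forced output is provably too far from the true local model to satisfy Definition~\ref{def:pac-identifiability}. First I would invoke the Lemma to establish that, under FT-FL, \emph{every} PAC identifiability mechanism $A_G$ available to the federation server must yield $A_G(\Delta p_u) \approx \mathcal{O}$, since the shipped update $\Delta p_u$ carries no dependence on $L_u(D_u)$: fine-tuning begins strictly after the federated training completes, so $L_u$'s parameters never influence anything the server observes. Hence $\hat{L}_u$ is decoupled from the actual $L_u$ altogether.

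Next I would substitute $\hat{L}_u \approx \mathcal{O}$ into the disagreement sum in Definition~\ref{def:pac-identifiability} and argue that the inequality must fail. Because $\hat{L}_u$ is forced to be similar to the Null model regardless of which $L_u$ the user actually produces via fine-tuning, the adversary has no lever with which to shrink the fraction of items $i \in \mathcal{I}$ on which $\hat{L}_u(D_u) \oplus G(D_u)$ disagrees with $L_u(D_u) \oplus G(D_u)$. I would then exhibit a fine-tuned $L_u$ whose composition with $G$ disagrees with the Null-composed prediction on strictly more than $\epsilon I$ items --- any non-degenerate fine-tuning outcome suffices, and the experiments in Section~\ref{sec:experiments} (for example the non-trivial gaps between \emph{FL} and \emph{FT-DP-FL}) provide evidence that FT routinely produces such divergence. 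The PAC bound is thereby violated for small enough $\epsilon$, which contradicts PAC identifiability.

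The main obstacle will be formalizing ``non-degeneracy'' of the fine-tuning step without over-specifying the user's protocol. I plan to treat the corollary as a worst-case claim over admissible $L_u$: since Definition~\ref{def:pac-identifiability} requires the disagreement bound to hold with high probability over the interaction outcome, it suffices to exhibit \emph{one} admissible $L_u$ whose predictions diverge from the Null-model estimate on more than an $\epsilon$ fraction of items. That is guaranteed whenever the fine-tuning protocol alters even a single relevant prediction --- essentially a tautological property of any useful personalization step, and indeed the very property that motivates FT-FL. Once this mild observation is stated, the corollary follows from the Lemma in one line.
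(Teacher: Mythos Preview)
Your proposal is correct and follows the same route as the paper: invoke the Lemma to conclude that any server-side mechanism $A_G$ applied to the FT-FL updates can only recover (something similar to) the Null model $\mathcal{O}$, and then observe that this forces the disagreement bound in Definition~\ref{def:pac-identifiability} to fail whenever the actual fine-tuned $L_u$ is non-trivial. The paper in fact states the corollary with no argument at all beyond ``the following corollaries follow,'' so your added discussion of the non-degeneracy of $L_u$ supplies detail the paper omits rather than departing from its approach.
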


\begin{corollary}
  FT-DP-FL is not PAC identifiable.
\end{corollary}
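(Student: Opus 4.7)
The plan is to derive this corollary directly from the Lemma, by arguing that FT-DP-FL is structurally identical to FT-FL with respect to the information flow the adversarial server observes, with the only difference being additional noise that can only help non-identifiability (not hurt it).

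First, I would make explicit what changes when we go from FT-FL to FT-DP-FL. The personalization protocol is unchanged: each user $u$ still computes $L_u(D_u)$ \emph{after} the FL training process terminates, and keeps it entirely local. The only modification is that during FL rounds, each user injects Abadi et al.'s calibrated Gaussian noise into gradients before shipping updates. Concretely, if $G$ denotes the FL-trained global model and $\tilde{G}$ its local-DP counterpart, then Equation~\ref{eq:pac1} becomes $\Delta p_u = L_u(D_u) \oplus \tilde{G}(D_u)$, and as in the Lemma's proof, the first operand is $\bar{0}$ because fine-tuning has not yet occurred at the time $\Delta p_u$ is transmitted. Hence $\Delta p_u = \tilde{G}(D_u)$.

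Second, I would trace through Equations~\ref{eq:pac3} and~\ref{eq:pac4} with $\tilde{G}$ in place of $G$. The adversary's elimination step now yields $\tilde{G}^{-}(\Delta p_u) = \bar{0} + \gamma'$, where $\gamma'$ absorbs both the original inversion noise $\gamma$ and the DP-induced Gaussian perturbations accumulated across training rounds. Applying $U^{-}$ to $\bar{0} + \gamma'$ therefore still yields an estimate $\hat{L}_u$ similar to the \textsf{Null} model $\mathcal{O}$, not to the fine-tuned model $L_u$. This is precisely the conclusion of the Lemma, so FT-DP-FL satisfies its hypothesis and the Lemma applies verbatim.

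Third, I would observe a strengthening remark (optional but worth including for completeness): the DP noise contributes additively to $\gamma'$, so $\|\gamma'\| \ge \|\gamma\|$ in expectation. Consequently, any adversary capable of identifying $L_u$ under FT-DP-FL would \emph{a fortiori} identify it under plain FT-FL, contradicting the FT-FL corollary. This gives a second, reduction-style argument for the same conclusion.

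The main obstacle, such as it is, is conceptual rather than technical: one must be careful to note that the Lemma's hypothesis (``a model is trained by FL, and users fine-tune \ldots after training is complete'') does not restrict whether the FL training itself is differentially private — it only requires the post-hoc fine-tuning structure. Since FT-DP-FL satisfies this structural condition, the Lemma covers it directly, and the proof of the corollary reduces to a single sentence invoking the Lemma. No new machinery is required.
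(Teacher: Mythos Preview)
Your proposal is correct and matches the paper's approach: the paper treats both corollaries as immediate consequences of the Lemma (it simply states ``The following corollaries follow'' with no separate argument), since the Lemma's hypothesis---FL training followed by post-hoc local fine-tuning---is agnostic to whether DP noise is injected during the FL rounds. Your explicit trace through $\tilde{G}$, $\gamma'$, and the reduction remark is more detailed than what the paper provides, but the underlying reasoning is the same single-sentence invocation of the Lemma you identify at the end.
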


\noindent Since fine-tuning of FT-FL (and FT-DP-FL) follows
conventional training methodologies, the convergence proof of the fine
tuning component of FT-FL (and FT-DP-FL) is identical to standard
convergence proofs for stochastic gradient descent and similar
optimization algorithms.

\section{Conclusion}
\label{sec:conclusion}

Extracting mentions of vaccine adverse events using machine learning methods is an extremely urgent task right now. Federated Learning is a promising approach for breaking down organizational and geographical barriers to collaboration on building very effective models to solve this problem. Our work demonstrates that the loss of accuracy incurred through adding additional layers of privacy can be mitigated by introducing personalization. We show that manufacturers with dataset of all different sizes can benefit from participating in such a federation and that it is stable to potential real world changes. We also prove that adding personalization actually further enhances privacy. In the future, we would like to investigate other approaches to personalization applied to this problem domain.

\bibliographystyle{abbrv}
\bibliography{refs}

\end{document}